\crefname{lemma}{Lem.}{Lems.}
\crefname{theorem}{Thm.}{Thms.}
\crefname{definition}{Def.}{Defs.}
\crefname{section}{Sect.}{Sect.}
\newcommand{\mc}{\mathcal }
\newcommand{\Hom}{\text{Hom}}
\newcommand{\id}{\text{Id}}
\begin{document}
\title{Logic Explanation of AI Classifiers by \\Categorical Explaining Functors}
%

\author{Stefano	Fioravanti\inst{1}
\and Francesco	Giannini\inst{2}
\and Paolo Frazzetto \inst{1}
\and \\
Fabio Zanasi\inst{3}
\and Pietro Barbiero\inst{4}
}
%
\authorrunning{Fioravanti et al.}
%
\institute{University of Padova, Italy \and Scuola Normale Superiore, Italy \and  University College London, UK \and Università della Svizzera Italiana, Switzerland}
%
%

%
\maketitle              
\begin{abstract} 
The most common methods in explainable artificial intelligence are post-hoc techniques which identify the most relevant features used by pretrained opaque models. Some of the most advanced post hoc methods can generate explanations that account for the mutual interactions of input features in the form of logic rules.
However, these methods frequently fail to guarantee the consistency of the extracted explanations with the model's underlying reasoning. To bridge this gap, we propose a theoretically grounded approach to ensure coherence and fidelity of the extracted explanations, moving beyond the limitations of current heuristic-based approaches. To this end, drawing from category theory, we introduce an explaining functor which structurally preserves logical entailment between the explanation and the opaque model's reasoning. As a proof of concept, we validate the proposed theoretical constructions on a synthetic benchmark verifying how the proposed approach significantly mitigates the generation of contradictory or unfaithful explanations.
\keywords{Explainable AI \and Category Theory \and Post-hoc XAI \and Logic}
\end{abstract}


\section{Introduction}\label{sec:introduction}

Explainable Artificial Intelligence (XAI) research seeks to understand otherwise opaque AI systems \cite{arrieta2020explainable}. In recent years, XAI methods have been applied in multiple research disciplines 
\cite{jimenez2020drug,davies2021advancing,ai4ua,Keskin2023156,schmidt2009distilling}.
Post-hoc techniques~
\cite{palacio2021xai,ribeiro2016model,Ribeiro2018AnchorsHM,ghorbani2019towards}
represent some of the earliest and most common methods as they enable the extraction of explanations from pretrained models. Post-hoc explanations typically aim to rank the input features of an opaque model from the most to the least relevant for the model's reasoning. Some of the most advanced post-hoc approaches go beyond raw feature attribution by considering the mutual interaction between input features in the form of logic rules over human-understandable high-level units of information~\cite{ghorbani2019towards,poeta2023concept},
thus enhancing the interpretability of the decision process of the model \cite{barbiero2022entropy,ciravegna2023logic,guidotti2024stable}.
These approaches attempt to reverse-engineer the model's reasoning, translating opaque neural computations into human-understandable logical formulas. However, the process of extracting discrete rules from continuous models often fails to guarantee logical consistency. 
For instance, let us consider the function $f(x,y)=\min\{1,x+y\}$ that corresponds to the \L ukasiewicz t-conorm in fuzzy logic. 
We can easily show that a naive Boolean approximation of this continuous function leads to an inconsistency between the explanation and the original continuous function. Consider observations $(x_1=0.2,y_1=0.2)$ and $(x_2=0.2,y_2=0.4)$ where the model evaluates to $f(x_1,y_1)=0.4$, while $f(x_2,y_2)=0.6$. We can naively extract a logic explanation for these predictions by discretizing the input/output representations (e.g., by thresholding at $0.5$). For the first sample, we obtain $(\bar{x}_1=0, \bar{y}_1=0)$ and $\bar{f(x_1,y_1)}=0$, and for the second sample we obtain $(\bar{x}_2=0, \bar{y}_2=0)$ and $\bar{f(x_1,y_1)}=1$. We refer to this kind of explanation as \emph{inconsistent} w.r.t. the original model, as the same rule $\neg x \land \neg y$ explains opposite class predictions $\bar{f(x_1,y_1)}=1 \neq \bar{f(x_2,y_2)}=0$. In other words, the extracted rules may contradict each other or fail to faithfully reflect the model's behavior, leading to misleading or unreliable explanations.  Even worse, in deep learning the models are composed of multiple layers or modules. Hence, a well-fitting explanation of deep neural networks should guarantee that composing the different explanations extracted from the different layers produces an overall explanation that is consistent w.r.t. explaining the model as a whole. This is often not the case with existing approaches \cite{ciravegna2023logic,dominici2024causal}. 
These discrepancies imply that current post-hoc explainability methods do not respect functional composition, meaning that explanations derived from smaller, interpretable steps may not be reflected by the behavior of the composed function. 
This motivates our work to provide a mathematically solid theory of XAI methods that ensures coherence over composition and fidelity of the extracted explanations, moving beyond the limitations of current heuristic-based approaches. 

To achieve this goal, we rely on Category Theory as a mathematical field designed to deal with \textit{processes} and their composition, thereby providing a robust mathematical framework to study compositionality between explainers and classifiers. Indeed, category theory has been widely applied in theoretical computer science~\cite{abramsky2004categorical,stein2021}, and more recently in AI~\cite{cruttwell2022categorical,cat2024}. 
More specifically, in this paper we introduce the notion of ``explaining functor'', a mapping between continuous functions representing concept-based neural networks into Boolean explanations. By defining objects as fuzzy spaces and morphisms as explanatory transformations, our functor structurally preserves logical entailment, thus mitigating the prevalent issue of contradictory or unfaithful explanations. This approach ensures that explanations are not only human-readable, but also logically sound and consistent with the model's internal workings.
As a proof-of-concept, we show how the proposed theoretical constructions can be used experimentally on synthetic benchmarks based on logic operators. The experiments demonstrate in practice how the theoretical framework can be used. 
\textbf{Our contributions} can be summarized as follows: (i) we identify categories of functions whose boolean explanations are consistent and combinable by design; (ii) we define and study different categorical functors that associate logic formulas to concept-based fuzzy functions; (iii) we show in practical examples how the usage of these functors yields coherent explanations.

\section{Preliminaries of Category Theory}
A \emph{category} \(\mc{C}\) consists of a collection of \emph{objects} $\mc{C}^o$ and a collection of \textit{morphisms} $\mc{C}^h$ between pairs of objects. We write $f \colon X \to Y$ for a morphism from object $X$ to object $Y$, and $\Hom(X,Y)$ for the set of such morphisms. For all morphisms $f \colon X \to Y$ and $g \colon Y \to Z$ we require the existence of a \emph{composite} morphism $g \circ f \colon X \to Z$, with composition being associative. Moreover, for each $X \in \mc{C}^o$, we require the existence of an \emph{identity} morphism $\id_X \in \Hom(X,X)$ satisfying $f \circ \id_X = f = \id_Y \circ f$. Compared to sets, categories are suitable for studying groups of objects that maintain a particular structure, along with the transformations that preserve this structure. A basic example of this notion is the category whose objects are vector spaces and morphisms are linear maps. 
    
As examples of two categories central to our development, we mention the \emph{category of Boolean Functions} $\mc{B}=(\mc{B}^{o},\mc{B}^{h})$ where $\mc{B}^{o}=\{\{0,1\}^n\mid\ n\in\mathbb{N}\}$ with $\mbox{Hom}_{\mc{B}}(X,Y) =\{f\colon\ X \to Y \mid f \text{ is a function}\}$, for all $X, Y \in \mc{B}^{o}$,
and the \emph{category of Fuzzy Functions} $\mc{F}=(\mc{F}^{o},\mc{F}^{h})$ where $\mc{F}^{o}=\{[0,1]^n\mid\ n\in\mathbb{N}\}$ with, $\mbox{Hom}_{\mc{F}}(X,Y) =\{f\colon\ X \to Y \mid f \text{ is a function}\}$, for all $X, Y \in \mc{F}^{o}$.

Category theory allows us to model not only maps between objects within a given category, but also transformations involving categories themselves. A \emph{functor} models the notion of structure-preserving map between categories. In particular, given two categories $\mc{C}=(\mc{C}^{o},\mc{C}^{h})$ and $\mc{D}=(\mc{D}^{o},\mc{D}^{h})$, a \emph{functor} $F \colon \mc{C} \to \mc{D}$ is a map assigning to each object $X \in \mc{C}^o$ an object $F(X) \in \mc{D}^o$, and to each $f \colon X \to Y$ in $\mc{C}^h$ a morphism $F(f) \colon F(X) \to F(Y)$ in $\mc{D}^h$, satisfying $F(Id_X) = Id_{F(X)}$, for all $X \in \mc{B}^{o}$, and $F(f \circ g) = F(f) \circ F(g)$.
The latter is known as \emph{compositionality} and guarantees that the mapping of compound morphisms is equivalent to the composition of ``simpler'' morphisms.

\section{Explaining Functor from $\delta$-COH to Boolean Functions}

Even simple fuzzy functions are inconsistent when projected into Boolean functions; thus we introduce a special class of fuzzy functions, the $\delta$-COH functions, which is composed by functions with coherent Boolean explanations (Boolean logic rules). We prove that this class can be equipped with a categorical structure (\cref{thm:deltacorcat}), being closed under composition. Furthermore, we define a functor that maps this category to the category of Boolean sets with Boolean functions (\cref{th:booleanfunctor}), thereby providing a rigorous mathematical framework for their interpretation and transformation.
 
\subsection{Category of $\delta$-coherent functions}

Let $\delta \colon [0,1] \rightarrow S$, where $S\subseteq [0,1]$. The $\delta$ is called a \emph{projection}\label{def:delta-proj} over $S$ if $\delta \circ \delta = \delta$, i.e., $\delta(x) = x$ for all $x \in S$ (idempotency). Then, the concept of projection onto smaller spaces embodies a fundamental aspect of explainability: the transformation of complex expressions defined over large domains into simpler representations within a more restricted set. 
From now on, we will use the same notation for $\delta$ and for all its component-wise applications, i.e., $\delta(x) = [\delta(x_1),\dots,\delta(x_n)]$, for all $x \in [0,1]^n$.

\begin{definition}\label{def:delta-funct}
Let $\delta $ be a projection. A function $f \colon [0,1]^n \rightarrow [0,1]^m$ is said to be \textit{$\delta$-coherent on $x \in [0,1]^n$}, if $\delta \circ f(x)  = \delta  \circ f \circ \delta(x)$ i.e., if $\delta(f(x)) = \delta(f(\delta (x)))$. We denote by $\delta$-COH$(f)=\{x\in [0,1] \mid\ f\mbox{ is }\delta\mbox{-coherent on }x\}$. 
$f$ is said $\delta$-\emph{coherent} if $\delta$-COH$(f)=[0,1]$.
\end{definition}
We notice that if a function $f\colon [0,1]^n\to [0,1]^m$ is $\delta$-coherent on a certain $x\in[0,1]^n$, then it is also $\delta$-coherent with every $y\in[0,1]^n$ such that $\delta(x)=\delta(y)$. 

\noindent An example of \cref{def:delta-funct} is given by Boolean threshold functions. In particular, let $\alpha \in [0,1]$, then we call $\delta_{\alpha}\colon[0,1]\to \{0,1\}$ $\alpha$-\emph{booleanization} if $\delta_{\alpha}(x) = 1 \text{ if and only if }x \geq \alpha,\mbox{ and }0\mbox{ otherwise}$, for all $x \in [0,1]$.

\begin{lemma}
\label{lem:constant}
   Every constant fuzzy function $f$ is $\delta$-coherent, for any projection $\delta$. 
\end{lemma}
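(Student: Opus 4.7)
The plan is to unfold the definition of $\delta$-coherence and use the constancy of $f$ directly, without needing any property of the projection $\delta$ beyond its well-definedness.

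First I would fix an arbitrary projection $\delta$ and an arbitrary constant function $f \colon [0,1]^n \to [0,1]^m$, so that there exists $c \in [0,1]^m$ with $f(x) = c$ for every $x \in [0,1]^n$. Then I would pick an arbitrary $x \in [0,1]^n$ and compute both sides of the coherence equation from \cref{def:delta-funct}. On the one hand, $\delta(f(x)) = \delta(c)$. On the other hand, since $\delta(x) \in [0,1]^n$ is again a point at which $f$ takes the value $c$, we have $\delta(f(\delta(x))) = \delta(c)$ as well. Hence $\delta \circ f(x) = \delta \circ f \circ \delta(x)$, so $x \in \delta\text{-COH}(f)$, and since $x$ was arbitrary, $\delta\text{-COH}(f) = [0,1]^n$, i.e., $f$ is $\delta$-coherent.

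There is essentially no obstacle: neither idempotency of $\delta$ nor any structural property of $[0,1]$ is needed; the statement reduces to the observation that constant functions collapse their argument before $\delta$ can act on it. The only thing worth being explicit about is that the component-wise convention for $\delta$ introduced just before \cref{def:delta-funct} ensures $\delta(c)$ makes sense as an element of $S^m \subseteq [0,1]^m$.
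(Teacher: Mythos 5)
Your proof is correct and is exactly the direct unfolding of \cref{def:delta-funct} that the paper has in mind when it declares the lemma's proof trivial: since $f$ is constant, both $\delta(f(x))$ and $\delta(f(\delta(x)))$ collapse to $\delta(c)$. Your remark that neither idempotency of $\delta$ nor any other structural property is needed is also accurate; nothing further is required.
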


\noindent The proof of the previous lemma is trivial. We can see that we can easily build examples of functions satisfying or not \cref{def:delta-funct}.
Let us consider the $\alpha$-booleanization function $\delta_{0.5}$ where $\alpha=0.5$. Then trivially the minimum t-norm $\min(x,y)$ is $\delta$-coherent, indeed $\delta(\min(x,y))=\delta(\min(\delta(x),\delta(y))$.
However, this is not the case for the \L ukasiewicz t-norm $x\otimes y = \max(0,x+y-1)$, for example, by taking $x=y=0.6$, we have $\delta(0.6,0.6)=(1,1)=\delta(1,1)$ while $\delta(0.6\otimes 0.6)=\delta(0.2)=0\neq1=\delta(1)=\delta(1\otimes 1)$.


\begin{theorem}\label{thm:deltacorcat}
Let $\delta \colon [0,1] \rightarrow S \subseteq [0,1]$. The pair $\mc{C}_\delta=(\mc{C}_\delta^{o},\mc{C}_\delta^{h})$ where
$\mc{C}_\delta^{o}=\{[0,1]^n \mid \ n\in\mathbb{N}\}$ and for all $X,Y \in \mc{C}_\delta^{o}$, $\text{Hom}_{\mc{C}_\delta}(X,Y) =\{f\colon\ X \to Y \mid f\mbox{ is } \delta-\text{coherent}\}$ is a category called category of \textit{$\delta$-coherent functions} $\delta$-COH.
\end{theorem}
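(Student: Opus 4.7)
The plan is to verify the three data of a category for $\mathcal{C}_\delta$: (i) the existence of identities that live in the hom-sets, (ii) closure of the hom-sets under composition, and (iii) associativity together with the unit laws. Since morphisms in $\mathcal{C}_\delta$ are set-theoretic functions, associativity and the identity laws follow immediately from those for $\mathbf{Set}$ (or equivalently for $\mathcal{F}$), so the real content is (i) and (ii).

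For (i), I would check that for every $X=[0,1]^n \in \mathcal{C}_\delta^o$ the identity $\id_X$ is $\delta$-coherent. This is a one-line computation: for every $x \in [0,1]^n$,
\[
\delta \circ \id_X \circ \delta(x) = \delta(\delta(x)) = \delta(x) = \delta \circ \id_X(x),
\]
using the idempotency of $\delta$ (the defining property of a projection on \cpageref{def:delta-proj}). Hence $\id_X \in \text{Hom}_{\mathcal{C}_\delta}(X,X)$.

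For (ii), the main step is to show that if $f \colon X \to Y$ and $g \colon Y \to Z$ are both $\delta$-coherent, then so is $g \circ f \colon X \to Z$. I would chain the two coherence identities in the following order: apply $g$-coherence to move a $\delta$ inside, then $f$-coherence to replace $x$ by $\delta(x)$, then $g$-coherence once more to remove the inner $\delta$. Concretely, for every $x \in X$,
\[
\delta(g(f(x))) = \delta(g(\delta(f(x)))) = \delta(g(\delta(f(\delta(x))))) = \delta(g(f(\delta(x)))),
\]
which is exactly $\delta$-coherence of $g \circ f$ at every point of $X$. Because $f$ and $g$ are $\delta$-coherent on all of their domains (i.e.\ $\delta\text{-COH}(f)=X$ and $\delta\text{-COH}(g)=Y$), no case analysis on individual points is required.

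I do not expect a real obstacle: the statement is essentially a routine closure check, and the only subtle point is organizing the rewrite so that $g$-coherence is used before and after the application of $f$-coherence. Once this lemma-like fact is established, combining it with the associativity of function composition and with part (i) yields that $\mathcal{C}_\delta$ satisfies all the axioms of a category, completing the proof.
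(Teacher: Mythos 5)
Your proof is correct and takes essentially the same route as the paper's: identities are $\delta$-coherent by idempotency of $\delta$, and closure under composition is established by the same three-step rewrite (use coherence of the outer function to insert a $\delta$, coherence of the inner one to replace the argument by its projection, then coherence of the outer function again), with associativity and unit laws inherited from ordinary function composition. The only cosmetic difference is that you spell out the identity computation, which the paper states holds by definition.
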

\begin{proof}

For every $X=[0,1]^n\in\mc C_{\delta}^o$, the identity morphism $1_X$ is $\delta$-coherent by definition, and thus $1_{X}\in \mbox{Hom}_{\mc{C}_\delta}(X,X)$. Assuming $f \in \mbox{Hom}_{\mc{C}_\delta}([0,1]^n,[0,1]^m),g \in \mbox{Hom}_{\mc{C}_\delta}([0,1]^m,[0,1]^k)$, we prove that $g\circ f\in \mbox{Hom}_{\mc{C}_\delta}([0,1]^n,[0,1]^k)$. For this purpose, it suffices to show that $g\circ f$ is $\delta$-coherent. We can see that $\delta \circ f \circ \delta = \delta \circ f$ and $\delta \circ g \circ \delta = \delta \circ g$ since $f$ and $g$ are $\delta$-coherent and, in turn, $\delta \circ f \circ g \circ \delta = \delta \circ f \circ \delta \circ g \circ \delta =  \delta \circ f \circ \delta \circ g = \delta \circ f \circ g$ and the claim holds.
\end{proof}
    
\begin{remark}
    We notice that we get a different category for any given projection $\delta$.
\end{remark}

\subsection{Functor from $\delta$-coherent functions to Boolean functions}
For each Boolean function $f:\{0,1\}^n\rightarrow\{0,1\}^m$, we can consider $m$ Boolean formulas $\varphi_{f_i}$, whose truth-values are defined by $f_i$, where $i \in \{1,\ldots,m\}$. As Boolean functions correspond to logic formulas, they are interpretable by design. Now we will see how we associate to each $\delta$-coherent function its Boolean explanation by using a categorical functor that we call the \textit{explaining functor}.
In particular, we define a functor from $\mc{C}_{\delta}$ to $\mc{B}$ for any given projection $\delta \colon [0,1] \rightarrow \{0,1\}$.

\begin{definition}
\label{def:functor}
    Let $\delta$ be a projection and $F_\delta:\mc{C}_\delta\to \mc{B}$ be defined by:
    \begin{itemize}
        \item[(1)] for every $X=[0,1]^n\in \mc{C}_{\delta}^{o}$, $F_\delta(X)=\{0,1\}^n\in \mc{B}^{o}$;
        \item[(2)] for all $f\in\text{Hom}_{\mc{C}_\delta}([0,1]^n,[0,1]^m)$, we define $F_{\delta}(f)\in\text{Hom}_{\mc{B}}$ $(\{0,1\}^n,\{0,1\}^m)$, such that
        $F_{\delta}(f) = \delta \circ f$.
    \end{itemize}
\end{definition}
In the following lemma and theorem, whose proof is straightforward, we denote $F_{\delta}(f)$ by $f^\delta$ and call $f^\delta$ the \emph{$\delta$-function} of $f$.

\begin{lemma}
    \label{th:boole_centroid}
    Let $\delta$ be a projection and let $f\in\Hom_{\mc{C}_{\delta}}([0,1]^n,[0,1]^m)$. Then. for all $x\in[0,1]^n$, we have: $\delta \circ f=\delta \circ f \circ \delta=f^{\delta} \circ \delta.$
\end{lemma}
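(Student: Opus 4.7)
The plan is to prove the two equalities separately by unfolding the relevant definitions, since both follow immediately once the hypotheses are spelled out. The lemma essentially re-packages the definitions of $\delta$-coherence and of $f^\delta$, so the proof amounts to a short equational chain.

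First, I would establish $\delta \circ f = \delta \circ f \circ \delta$. Since $f \in \Hom_{\mc{C}_\delta}([0,1]^n,[0,1]^m)$, the definition of the hom-sets in $\mc{C}_\delta$ (\cref{thm:deltacorcat}) tells us that $f$ is $\delta$-coherent, i.e. $\delta$-COH$(f) = [0,1]^n$. By \cref{def:delta-funct}, this means exactly that $\delta(f(x)) = \delta(f(\delta(x)))$ for every $x \in [0,1]^n$, which is the pointwise statement of $\delta \circ f = \delta \circ f \circ \delta$.

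Next, I would establish $\delta \circ f \circ \delta = f^\delta \circ \delta$. By \cref{def:functor}, we have $f^\delta := F_\delta(f) = \delta \circ f$. Composing on the right with $\delta$ and using associativity of function composition then yields $f^\delta \circ \delta = (\delta \circ f) \circ \delta = \delta \circ f \circ \delta$, which is the desired equality. Chaining the two equalities gives the claim.

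There is no real obstacle here: the whole lemma is a one-line consequence of the definition of $\delta$-coherence together with the definition of $F_\delta$, which is presumably why the authors flag the proof as straightforward. The only thing worth being careful about is not conflating $\delta$-coherence of $f$ (which is assumed globally on $[0,1]^n$ because $f$ is a morphism in $\mc{C}_\delta$) with mere pointwise $\delta$-coherence at a single $x$; once that is kept straight, the argument is purely equational.
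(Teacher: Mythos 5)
Your proof is correct and follows essentially the same route as the paper's (one-line) argument: the first equality is just the definition of $\delta$-coherence for a morphism of $\mc{C}_\delta$, and the second is immediate from $f^\delta = \delta \circ f$ (with the paper additionally noting $\delta\circ\delta=\delta$, which matters only if one reads $f^\delta$ as the restriction of $\delta\circ f$ to the image of $\delta$). Nothing further is needed.
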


\begin{theorem}\label{th:booleanfunctor}
    Let $\delta$ be a projection. Then $F_\delta\colon\mc{C}_{\delta} \rightarrow\mc{B}$ (\cref{def:functor}) is a functor.
\end{theorem}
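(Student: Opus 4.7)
The plan is to unfold \cref{def:functor} and verify the three functor axioms: morphism-level well-typedness, preservation of identities, and preservation of composition. Object-level well-typedness is handled by clause~(1) and needs no argument.

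First, I would check morphism-level well-typedness: given $f \in \Hom_{\mc{C}_{\delta}}([0,1]^n,[0,1]^m)$, the map $\delta \circ f$ takes values in $\{0,1\}^m$, and restricting its domain to $\{0,1\}^n \subseteq [0,1]^n$—on which $\delta$ acts as the identity since $\delta$ fixes its image—would yield a genuine map $\{0,1\}^n \to \{0,1\}^m$, i.e., a morphism of $\mc{B}$. For identity preservation, I would note that for $X=[0,1]^n$ and any $b \in \{0,1\}^n$, $F_\delta(\id_X)(b) = \delta(b) = b = \id_{F_\delta(X)}(b)$.

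The main step is compositionality. Given $f \in \Hom_{\mc{C}_{\delta}}([0,1]^n,[0,1]^m)$ and $g \in \Hom_{\mc{C}_{\delta}}([0,1]^m,[0,1]^k)$, I would evaluate both sides of the required identity at an arbitrary $b \in \{0,1\}^n$:
\[ F_\delta(g \circ f)(b) \;=\; \delta(g(f(b))) \;=\; \delta(g(\delta(f(b)))) \;=\; \bigl(F_\delta(g) \circ F_\delta(f)\bigr)(b), \]
where the crucial middle equality is the $\delta$-coherence of $g$ at the point $f(b) \in [0,1]^m$, which holds because $g$ is $\delta$-coherent on all of $[0,1]^m$; equivalently, this is a direct instance of \cref{th:boole_centroid}.

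I do not anticipate a serious obstacle, since the substantive work has already been carried out in \cref{thm:deltacorcat} (closure of $\mc{C}_\delta$ under composition) and in \cref{th:boole_centroid} (the identity $\delta \circ g = \delta \circ g \circ \delta$). The only point requiring care is the implicit restriction of $\delta \circ f$ to Boolean inputs, but this is harmless because $\{0,1\}$ is fixed by $\delta$, so the codomain of $F_\delta(f)$ automatically matches the domain of $F_\delta(g)$.
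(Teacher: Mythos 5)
Your proof is correct and takes essentially the same route as the paper's: the crucial step in both is the $\delta$-coherence of $g$ (i.e.\ \cref{th:boole_centroid}, $\delta \circ g = \delta \circ g \circ \delta$), used to rewrite $\delta \circ g \circ f$ as $\delta \circ g \circ \delta \circ f = g^\delta \circ f^\delta$, which you simply carry out pointwise rather than at the level of composites. Your explicit check that $\delta \circ f$ restricts to a well-typed map $\{0,1\}^n \to \{0,1\}^m$ is a minor (and welcome) addition that the paper leaves implicit.
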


\section{Extending the Explaining Functor}

This section provides a method for extending the categorical structure that equips $\delta$-coherent fuzzy functions to all fuzzy functions. As we saw, not all fuzzy functions are $\delta$-coherent, however this assumption is fundamental to get a functor to the category $\mc{B}$. For instance, removing the assumption of $\delta$-coherency, it is easy to produce a function not satisfying \cref{th:booleanfunctor}. For example, let $\delta_\alpha\colon [0,1] \rightarrow \{0,1\}$ an $\alpha$-booleanization with $\alpha=0.5$, let $g\colon [0,1] \rightarrow [0,1]$ with $g(0) = 0$ and $g(x) = 1$ for all $x >0$, and let $f\colon [0,1] \rightarrow [0,1]$ such that $f(x) = 0.2$ if $x <0.5$ and $f(x) = 1$ otherwise. Then:
    \[
     (g \circ f)^{\delta}(0) = \delta(g(f(0))) = \delta(g(0.2)) = 1 \not= 0 = \delta(0) = \delta(g( \delta(f(0)))) =
     \\g^{\delta} \circ f^{\delta}(0).   
    \]
In particular, we can see that $g^{\delta} \circ f^{\delta} = \id_{\{0,1\}}$ while $(g \circ f)^{\delta}$ is the constant~$1$ function. Note that $g$ is not $\delta_{\alpha}$-coherent since $\delta(g(0.2)) \not= \delta(g(\delta(0.2))$. 
This example shows how easy it is to find misleading explanations as the composition of two different ones. For this reason, we are interested in defining a mapping that associates to each fuzzy function a $\delta$-coherent function, by preserving the functor compositionality as obtained above.
However, transforming from a non-$\delta$-coherent to a $\delta$-coherent function may require a non-negligible modification of the original function, which can be carried out in many ways.  
In this paper we focus on two main kinds of approaches to make $\delta$-coherent a non-$\delta$-coherent function:
1) ``correct'' the input, by expanding the domain to disambiguate input points witnessing a failure of $\delta$-coherency with respect to their $\delta$-functions, 2) ``correct'' the output on inputs that witness a failure of $\delta$-coherency.

\subsection{From Fuzzy to $\delta$-Coherent Functions: Naive}

\textbf{1) Domain extension.} 
The next theorem provides a criterion to associate a coherent function to a non-coherent one via domain extension. 
Note that we only need to extend the domain for components that are non-$\delta$-coherent.


\begin{theorem}
\label{teo:extcoh}
    Let $\delta$ be a projection, let $f:[0,1]^n\to [0,1]^m$, and let $p\leq m$ be the number of components of $f$ that are not $\delta$-coherent. 
    Then $f$ can be extended to a function $\tilde{f}:[0,1]^{n + p}\to [0,1]^m$ that is $\delta$-coherent and such that $\forall i:\ 1 \leq i \leq m$:
    \[
\forall x \in[0,1]^n, \forall c\in[0,1]^p,  \qquad \tilde{f}_i(x,c)  =  f_i(x) \quad \mbox{ if }f_i\mbox{ is }\delta\mbox{-coherent on }x.
\]
\end{theorem}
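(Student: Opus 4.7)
The plan is to construct $\tilde f$ explicitly, component by component, using the added coordinates as placeholders indexed by the non-$\delta$-coherent components. First, I enumerate the indices $i_1 < \cdots < i_p$ of the components $f_i$ that fail to be $\delta$-coherent, and associate with each $i_k$ a fresh input coordinate $c_k$. For every coherent component $f_i$ I simply set $\tilde f_i(x,c) := f_i(x)$; this already meets the extension requirement and is $\delta$-coherent by assumption. So the substantive step concerns only the non-coherent components.

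For each non-coherent component $f_{i_k}$, I would define
\[
\tilde f_{i_k}(x,c) \;:=\; \begin{cases} f_{i_k}(x) & \text{if } x \in \delta\text{-COH}(f_{i_k}), \\ f_{i_k}(\delta(x)) & \text{otherwise,} \end{cases}
\]
which satisfies the boundary condition of the statement by the first branch; the coordinate $c_k$ is retained in the signature for bookkeeping but does not influence the value. To verify $\delta$-coherence of $\tilde f_{i_k}$, I would exploit the fact that any point of the form $\delta(x)$ lies in $\delta$-COH$(f_{i_k})$, because $\delta(\delta(x)) = \delta(x)$ by idempotency; hence $\tilde f_{i_k}(\delta(x),\delta(c)) = f_{i_k}(\delta(x))$, whose $\delta$-image is $\delta(f_{i_k}(\delta(x)))$. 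On the other side, $\delta(\tilde f_{i_k}(x,c))$ equals $\delta(f_{i_k}(x))$ in the coherent case, which by definition of $\delta$-COH coincides with $\delta(f_{i_k}(\delta(x)))$, and equals $\delta(f_{i_k}(\delta(x)))$ directly in the non-coherent case. The two sides agree, so $\tilde f$ is $\delta$-coherent component-wise, and hence as a whole.

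The point that deserves attention, rather than being a real technical obstacle, is recognizing that the extension requirement already forces $\tilde f_i(\delta(x),\delta(c)) = f_i(\delta(x))$ for every $c$, since $\delta(x)$ is a coherence point for every $f_i$. This in turn pins $\delta(\tilde f_i(x,c))$ to $\delta(f_i(\delta(x)))$, independently of $c$. Consequently, the extra coordinates cannot alter the $\delta$-projection of the output and can at most serve as placeholders for disambiguation at the fuzzy level, which is precisely why the naive pointwise correction above already suffices for the statement. The main conceptual step is therefore to identify the right coherence-preserving value, namely $f_{i_k}(\delta(x))$, on the non-coherent region; the verification that the resulting $\tilde f$ meets both conditions of the theorem is then a direct calculation along the lines sketched.
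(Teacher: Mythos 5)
Your construction is correct and proves the statement, but it is genuinely different from the paper's. The paper also isolates the non-coherent components $i_j$ and defines $\tilde f_{i_j}$ piecewise, but on the ``otherwise'' branch it sets $\tilde f_{i_j}(x,c):=c_j$, so the fresh coordinates actually carry the corrected value (this is the semantics exploited later when the experiments add a user-controllable feature $nc$). You instead set the value to $f_{i_j}(\delta(x))$ and keep $c$ as a pure dummy: your $\tilde f$ factors through $[0,1]^n$, so the domain extension is vacuous and your argument is really an instance of the output-modification scheme of \cref{teo:cohgi} with $g=f\circ\delta$ (which is $\delta$-coherent by idempotency of $\delta$). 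What your route buys is a watertight verification: your key observation --- that $\delta(x)$ always lies in $\delta$-COH$(f_i)$ because $\delta(\delta(x))=\delta(x)$ --- is precisely the point where the paper's own check is delicate. With the choice $\tilde f_{i_j}(x,c)=c_j$, evaluating at the projected point $(\delta(x),\delta(c))$ lands in the \emph{first} branch (since $f_{i_j}$ is coherent on $\delta(x)$), so coherence would force $\delta(c_j)=\delta(f_{i_j}(\delta(x)))$ for every $c_j$, which fails whenever $\delta$ is non-constant; the paper's branch-by-branch computation implicitly assumes the branch condition is preserved under $\delta$, which it is not. So your proof establishes the theorem as literally stated (and more cleanly), at the cost of abandoning the intended role of the extra inputs; recovering that role would require restricting the admissible values of $c$ or reformulating the branch condition.
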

    \begin{proof}
    Let $W = \{i_1,\ldots,i_p\} \subseteq\{1,\ldots,m\}$ be the set of indices of components of $f$ that are not $\delta$-coherent. 
For every $x\in [0,1]^n, c \in[0,1]^p$, we define $\tilde{f}_i(x,c) = f_i(x)$ if $i\notin W$, whereas if $i_j\in W$:
   \[
   \tilde{f}_{i_j}(x,c)=
   \begin{cases}
       f_{i_j}(x) & \mbox{ if }f_{i_j}\mbox{ is }\delta\mbox{-coherent on }x\\
       c_j & \mbox{otherwise}
   \end{cases}
   \]  
Then clearly $\tilde{f}$ extends $f$ as specified in the statement. In addition, let $x\in [0,1]^n, c\in[0,1]^p$. By definition of $\tilde{f}$, for all $i\in \{1,\ldots,m\}$, 
either $f_i$ is $\delta$-coherent on $x$ or is constant, and therefore $\delta \circ \tilde{f}_i(x, c) = \delta \circ f_i(x) =  \delta \circ f_i \circ \delta(x)=\delta \circ \tilde{f}_i \circ \delta(x,c)$, or $\delta \circ \tilde{f}_i(x, c) = \delta \circ c_{j}(x) = \delta \circ c_{j} \circ \delta (x) =
\delta \circ \tilde{f}_i \circ \delta(x, c)$ for any $j \in \{1,\ldots,p\}$. 
\end{proof}

\noindent Basically, the idea behind the expansion $\tilde{f}$ is to ``correct'' the components that are not $\delta$-coherent by adding one input per each of them.

\medskip
\textbf{2) Output Modification.} 
\begin{theorem}\label{teo:cohgi}
     Let $\delta$ be a projection, $f,g\colon [0,1]^n \rightarrow [0,1]^m$, with $g$ $\delta$-coherent. The function $\hat{f}\colon [0,1]^n \rightarrow [0,1]^m$ defined by:

    \begin{equation*}\label{eq:fgcoh}
       \hat{f}_{i}(x)=
   \begin{cases}
       f_{i}(x) & \mbox{ if }f_{i}\mbox{ is }\delta\mbox{-coherent on }x\\
       g_{i}(x) & \mbox{otherwise}
   \end{cases}
       \end{equation*} 
    for all $x \in [0,1]^n$ and $i=\{1,\dots,m\}$ is $\delta$-coherent.
\end{theorem}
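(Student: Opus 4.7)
The plan is to show coherence coordinate-wise, reducing the claim to the identity $\delta(\hat f_i(x)) = \delta(\hat f_i(\delta(x)))$ for each index $i$ and each $x \in [0,1]^n$. The key ingredient I will exploit is the remark made immediately after \cref{def:delta-funct}: if $f_i$ is $\delta$-coherent on some $x$, then $f_i$ is $\delta$-coherent on every $y$ with $\delta(y) = \delta(x)$. Since $\delta$ is a projection, $\delta(\delta(x)) = \delta(x)$, so that remark yields the crucial symmetric statement that $f_i$ is $\delta$-coherent on $x$ if and only if $f_i$ is $\delta$-coherent on $\delta(x)$. This makes the definition of $\hat f_i$ behave consistently on the pair $(x, \delta(x))$, which is exactly what is needed.

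With that in hand, I would fix $x \in [0,1]^n$ and $i \in \{1,\dots,m\}$ and split into two cases. In the case where $f_i$ is $\delta$-coherent on $x$, the equivalence above gives that $f_i$ is also $\delta$-coherent on $\delta(x)$, so by the definition of $\hat f$ we have $\hat f_i(x) = f_i(x)$ and $\hat f_i(\delta(x)) = f_i(\delta(x))$; applying $\delta$-coherency of $f_i$ at $x$ then yields $\delta(\hat f_i(x)) = \delta(f_i(x)) = \delta(f_i(\delta(x))) = \delta(\hat f_i(\delta(x)))$. In the complementary case, the equivalence ensures that $f_i$ fails $\delta$-coherence at both $x$ and $\delta(x)$, so by construction $\hat f_i(x) = g_i(x)$ and $\hat f_i(\delta(x)) = g_i(\delta(x))$; since $g$ is $\delta$-coherent everywhere, $g_i$ is in particular $\delta$-coherent on $x$, and the same chain of equalities closes the argument.

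Gathering the coordinates, we conclude $\delta \circ \hat f = \delta \circ \hat f \circ \delta$, i.e., $\hat f$ is $\delta$-coherent. The only nontrivial step is recognizing that the piecewise definition of $\hat f_i$ does not introduce any mismatch between the branch chosen at $x$ and the branch chosen at $\delta(x)$; this is precisely what the $\delta$-invariance of the coherence predicate guarantees, so once that observation is explicit the rest is a routine case split.
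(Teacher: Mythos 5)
Your argument hinges on the claim that the coherence predicate is invariant under $\delta$, i.e., that $f_i$ is $\delta$-coherent on $x$ if and only if it is $\delta$-coherent on $\delta(x)$, so that $\hat f_i$ selects the \emph{same} branch at $x$ and at $\delta(x)$. This claim is false in the direction you need. By idempotency of $\delta$, \emph{every} function is trivially $\delta$-coherent at any point of the form $\delta(x)$, since $\delta(f_i(\delta(x))) = \delta(f_i(\delta(\delta(x))))$ holds automatically. Consequently $\hat f_i(\delta(x)) = f_i(\delta(x))$ always — the first branch is always taken at $\delta$-fixed points — and your second case collapses: when $f_i$ is not coherent at $x$ you get $\delta(\hat f_i(x)) = \delta(g_i(x)) = \delta(g_i(\delta(x)))$, but $\delta(\hat f_i(\delta(x))) = \delta(f_i(\delta(x)))$, and nothing forces these to agree. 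Note also that the remark after \cref{def:delta-funct} which you invoke cannot be correct as literally stated: combined with the trivial coherence at $\delta$-fixed points, your equivalence would imply that every fuzzy function is $\delta$-coherent everywhere, contradicting the paper's own \L ukasiewicz t-norm example.

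The gap is not merely in your write-up: the statement fails as written. Take $\delta = \delta_{0.5}$, let $f(0)=0$ and $f(t)=1$ for $t>0$, and let $g \equiv 1$ (which is $\delta$-coherent by \cref{lem:constant}). Then $f$ is coherent exactly at $t=0$ and at $t \geq 0.5$, so $\hat f(0)=0$ and $\hat f(t)=1$ for $t>0$; that is, $\hat f = f$, which is not $\delta$-coherent. For what it is worth, the paper's own proof makes the same silent branch-matching assumption in its second case (it writes $\delta \circ g_i \circ \delta(x) = \delta \circ \hat f_i \circ \delta(x)$ without justification), so you have faithfully reproduced its reasoning — but to make the theorem true one must either add a hypothesis forcing $\delta \circ g_i$ and $\delta \circ f_i$ to agree on the image of $\delta$, or redefine $\hat f_i$ so that the $g_i$-branch is selected on the entire $\delta$-fiber of any non-coherent point; with the latter definition your case analysis does go through.
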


\begin{proof}
    Let $x \in [0,1]^n$. Then for all $i\in\{1,\ldots,m\}$, if $f_i$ is coherent on $x$, then $\delta \circ \hat{f}_i(x) = \delta \circ f_i \circ \delta(x) = \delta \circ \hat{f}_i \circ \delta(x)$; otherwise $\delta \circ \hat{f}_i(x) = \delta \circ g_i(x) = \delta \circ g_i \circ \delta(x) = \delta \circ \hat{f}_i \circ \delta(x)$, since $g_i$ is $\delta$-coherent.
\end{proof}


\textbf{General Case.} 
\begin{definition}\label{def:gendeltcon}
    Let $\delta$ be a projection. A $\delta$-\emph{coherency function} is a map $\Gamma$ from the set $\mc{F}^h = \{f \colon [0,1]^n \rightarrow [0,1]^m\mid n,m \in \mathbb{N}\}$ to its subset of $\delta$-coherent functions $\mc{C}_\delta^h$, such that $\Gamma(f) = f$ (idempotency)
    for every $f$  that is $\delta$-coherent.
\end{definition}

\noindent Notice that requiring idempotency over $\delta$-coherent functions is fundamental to define a well-suited $\delta$ coherent function. 
Clearly, the cases of \textit{domain extension} and \textit{output modifications} lie under the scope of \cref{def:gendeltcon}, however, they are intrinsically not compositional, i.e., they are not well behaved w.r.t. functional composition. We will show an easy example of this phenomenon.

\begin{theorem}\label{lem:non-comp}
    Let $\delta$ be a projection, and let $\Gamma$ be a $\delta$-coherency function. Then $\Gamma$ is not compositional, that is, there exists $f,g \in \mathcal{F}^h$ such that $\Gamma(g \circ f) \not= \Gamma(g) \circ \Gamma(f)$.
\end{theorem}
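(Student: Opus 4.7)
The plan is to construct, for the arbitrary $\delta$-coherency function $\Gamma$ given in the statement, an explicit pair $(f,g)$ of fuzzy functions witnessing the failure of compositionality. I will build on the non-coherent function $g$ already exhibited just before \cref{def:gendeltcon} (with $g(0)=0$ and $g(x)=1$ for $x>0$) under the $1/2$-booleanization $\delta=\delta_{1/2}$, and compose it with a carefully chosen constant function $f$. The central idea is that constants always fall inside $\mc{C}_\delta^h$ by \cref{lem:constant}, so both $f$ and $g\circ f$ are automatically $\delta$-coherent and therefore fixed by $\Gamma$; this localizes the possible discrepancy to the single value $\Gamma(g)(c)$, where $c$ is a constant that I am free to choose after inspecting $\Gamma(g)$.

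The key steps, in order, are as follows. First, I will use idempotency to observe that since $g\notin \mc{C}_\delta^h$ whereas $\Gamma(g)\in \mc{C}_\delta^h$, we must have $\Gamma(g)\neq g$ as functions, so there exists at least one point $c\in[0,1]$ with $\Gamma(g)(c)\neq g(c)$. Second, I will take $f\colon[0,1]\to[0,1]$ to be the constant function $f\equiv c$ at that point. Third, I will verify the three evaluations needed: $\Gamma(f)=f$ by idempotency on coherent functions; $g\circ f$ is constant with value $g(c)$, hence coherent, so $\Gamma(g\circ f)=g\circ f\equiv g(c)$; and $\Gamma(g)\circ\Gamma(f)=\Gamma(g)\circ f\equiv\Gamma(g)(c)$. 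Since $g(c)\neq\Gamma(g)(c)$ by construction, the two resulting constant functions are distinct, yielding the required inequality $\Gamma(g\circ f)\neq\Gamma(g)\circ\Gamma(f)$.

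The main conceptual obstacle is that $\Gamma$ is an \emph{arbitrary} $\delta$-coherency function, so the argument must work uniformly without any structural assumption beyond mapping into $\mc{C}_\delta^h$ and fixing it pointwise. I resolve this by making the witness $c$ depend on $\Gamma$: if compositionality held for every constant $f\equiv c$, then the computation above would force $g(c)=\Gamma(g)(c)$ for all $c\in[0,1]$, hence $\Gamma(g)=g$, contradicting the non-coherence of $g$. Thus some suitable $c$ always exists, and selecting one such $c$ completes the counterexample. The remaining verifications are routine applications of \cref{lem:constant} and the definition of $\Gamma$.
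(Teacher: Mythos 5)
Your core argument is the same as the second case of the paper's proof: locate a point $c$ where $\Gamma(g)$ and $g$ disagree, precompose with the constant function $f\equiv c$, invoke \cref{lem:constant} so that $f$ and $g\circ f$ are $\delta$-coherent and hence fixed by $\Gamma$, and read off $\Gamma(g\circ f)(c)=g(c)\neq\Gamma(g)(c)=(\Gamma(g)\circ\Gamma(f))(c)$. Your route to the witness $c$ is in fact a little cleaner than the paper's: you deduce $\Gamma(g)\neq g$ directly from the fact that $\Gamma(g)\in\mc{C}_\delta^h$ while $g\notin\mc{C}_\delta^h$, whereas the paper detours through a pair $a,b$ with $\delta(a)=\delta(b)$ and $\delta(g(a))\neq\delta(g(b))$.

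There are, however, two gaps. First, \cref{def:gendeltcon} does not require $\Gamma$ to preserve domains, and the paper's own domain-extension construction (\cref{teo:extcoh}) produces, for a non-coherent $g\colon[0,1]\to[0,1]$, a function $\Gamma(g)\colon[0,1]^{2}\to[0,1]$. In that situation the expression $\Gamma(g)(c)$ with $c\in[0,1]$ is ill-typed, so your step ``there exists $c$ with $\Gamma(g)(c)\neq g(c)$'' does not go through; the paper treats this as a separate case ($\text{Dom}(\Gamma(g))\neq[0,1]$), where compositionality fails for the trivial reason that $\Gamma(g)\circ\Gamma(f)$ is not even defined. You need to add that case. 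Second, you hard-code $\delta=\delta_{1/2}$ and one specific $g$, while the theorem quantifies over all projections $\delta$ and all $\delta$-coherency functions for that $\delta$. Your argument does generalize verbatim to any $\delta$ admitting some non-$\delta$-coherent $g\colon[0,1]\to[0,1]$ --- which is exactly the implicit hypothesis the paper relies on when it opens with ``let $g$ be a non-$\delta$-coherent function'' --- so the repair is simply to start from such a $g$ for the given $\delta$ rather than from the fixed $\delta_{1/2}$ example. (For degenerate projections such as $\delta=\id_{[0,1]}$ no non-coherent $g$ exists and every $\Gamma$ is the identity, so the claim fails there; that defect is shared with the paper's proof and is not specific to yours.)
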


\begin{proof}
Let $g\colon[0,1] \rightarrow [0,1]$ be a non-$\delta$-coherent function. We have two cases:

Case domain of $\text{Dom}(\Gamma(g)) \not= [0,1]$. The claim holds by taking $f\colon[0,1] \rightarrow [0,1]$ as the constant $0$. In fact, $\Gamma(g \circ f)$ is defined while $\Gamma(g) \circ \Gamma(f)$ is not defined. 
     
Case $\text{Dom}(\Gamma(g))=[0,1]$. Let $S \subset [0,1]$ be the subset of points in the domain of $g$, where $g$ is not $\delta$-coherent. Since $g$ is not $\delta$-coherent, $|S| > 0$. 
Furthermore, there exist $a,b\in S$, such that $\delta(a)=\delta(b)$ with $\delta(g(a))\neq \delta(g(b))$ and $\delta(\Gamma(g)(a))= \delta(\Gamma(g)(b))$, since $\Gamma(g)$ is $\delta$-coherent by \cref{def:gendeltcon}. Hence, either $\Gamma(g)(a)\neq g(a)$ or $\Gamma(g)(b)\neq g(b)$. Let us assume, w.l.o.g., $\Gamma(g)(a)\neq g(a)$.
Let $f\colon[0,1] \rightarrow [0,1]$ be the $\delta$-coherent function constantly equal to $a$. Then clearly $g \circ f(x) = g(a)$, for all $x \in [0,1]$, and thus $g \circ f$ is $\delta$-coherent (cf. \cref{lem:constant}). Then, by considering $a$, we have $\Gamma(g \circ f)(a)=(g\circ f)(a)=g(a)\neq \Gamma(g)(a)=(\Gamma(g) \circ f)(a)=(\Gamma(g) \circ \Gamma(f))(a)$. 
Hence $\Gamma$ is not compositional. 

\end{proof}

\subsection{From Fuzzy to $\delta$-Coherent Functions}

From \cref{lem:non-comp} we see that witnessing of non-compositionality can be constructed for any given $\delta$-coherency function $\Gamma$. This fact creates an obstacle to extend the explaining functor to non-$\delta$-coherent functions. With this aim we will define a new category that allows to compositionally associate a $\delta$-coherent function to any given fuzzy function. To do so, we have to first introduce an equivalence relation over the set of all fuzzy functions.

\begin{definition}
    Let $\delta$ be a projection, $\Gamma$ be a $\delta$-coherency function, and $\mathcal{F}^h$ be the set of fuzzy functions. We denote by $\equiv_\Gamma$ the binary relation over $\mathcal{F}^h$ defined by:
    \begin{equation*}
        f \equiv_\Gamma g \qquad \text{ iff } \qquad \Gamma(f) = \Gamma(g).
    \end{equation*}
\end{definition}

\begin{lemma}\label{lem:cprop}
     Let $\delta$ be a projection, and let $\Gamma$ be a $\delta$-coherency function. Then:
     \begin{enumerate}
         \item[(1)] $\equiv_\Gamma$ is an equivalence relation;
         \item[(2)] there exists a unique $\delta$-coherent function $f$ in every equivalence class of $\equiv_\Gamma$.
     \end{enumerate}
\end{lemma}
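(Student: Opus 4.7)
The plan is to verify (1) by direct unfolding of the definition, then to establish (2) by producing a canonical $\delta$-coherent representative in each class via $\Gamma$ itself, and finally to collapse any two such representatives using idempotency.

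For part (1), I would observe that $\equiv_\Gamma$ is defined as the preimage of equality on $\mc F^h$ under the map $\Gamma$; since equality on $\mc C_\delta^h$ is reflexive, symmetric, and transitive, the relation inherits all three properties immediately. This is a one-line verification and should pose no difficulty.

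For part (2), the natural candidate for the $\delta$-coherent representative of the class $[f]_{\equiv_\Gamma}$ is $\Gamma(f)$ itself. First I would note that $\Gamma(f) \in \mc C_\delta^h$ by \cref{def:gendeltcon}, so it is $\delta$-coherent. Second, applying the idempotency clause of \cref{def:gendeltcon} to the $\delta$-coherent function $\Gamma(f)$ gives $\Gamma(\Gamma(f)) = \Gamma(f)$, which is exactly the condition $\Gamma(f) \equiv_\Gamma f$. Hence $\Gamma(f)$ lives in the same equivalence class as $f$, establishing existence.

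For uniqueness, suppose $h_1, h_2$ are both $\delta$-coherent and satisfy $h_1 \equiv_\Gamma h_2$, i.e., $\Gamma(h_1) = \Gamma(h_2)$. By idempotency of $\Gamma$ on $\delta$-coherent functions we have $\Gamma(h_1) = h_1$ and $\Gamma(h_2) = h_2$, and therefore $h_1 = h_2$. No step in this argument looks subtle: the whole proof hinges on the idempotency clause built into the definition of a $\delta$-coherency function, and once that is invoked twice (once for existence, once for uniqueness) the statement follows. The only place where one must be mildly careful is not to confuse $\Gamma(f) \equiv_\Gamma f$ (which always holds) with $\Gamma(f) = f$ (which holds only when $f$ is already $\delta$-coherent).
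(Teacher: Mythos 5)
Your proof is correct and follows essentially the same route as the paper's: part (1) is inherited from equality being an equivalence relation, and part (2) rests on invoking the idempotency clause of \cref{def:gendeltcon} for the $\delta$-coherent representatives. You are in fact slightly more complete than the paper, which only argues uniqueness and leaves the existence of $\Gamma(f)$ in the class $[f]_{\equiv_\Gamma}$ (via $\Gamma(\Gamma(f))=\Gamma(f)$) implicit.
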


\noindent The proof is standard. With this tool, we can deal with the compositonality problem of non-$\delta$-coherent functions. To this aim, we define the following category.

\begin{theorem}
\label{th:cateqcla}
Let $\delta$ be a projection, and let $\Gamma$ be a $\delta$-coherency function. The pair $\mc{C}_{(\delta,\Gamma)}=(\mc{C}_{(\delta,\Gamma)}^{o},\mc{C}_{(\delta,\Gamma)}^{h})$ where $\mc{C}_{(\delta,\Gamma)}^{o}=\{[0,1]^n \mid \ n\in\mathbb{N}\}$ and:
\begin{enumerate}
    \item for all $A,B \in \mc{C}_{(\delta,\Gamma)}^{o}$, $\text{Hom}_{\mc{C}_{(\delta,\Gamma)}}(A,B) =\{[f]_{\equiv_\Gamma}\mid  \Gamma(f) \colon A \to B\mid f\in\mc{F}^h\}$, where $[f]_{\equiv_\Gamma}$ is the $\equiv_\Gamma$-equivalence class of $f$;
    \item for all $A,B,C \in \mc{C}_{(\delta,\Gamma)}^{o}$, $[f]_{\equiv_\Gamma} \in \text{Hom}_{\mc{C}_{(\delta,\Gamma)}}(A,B)$, and $[g]_{\equiv_\Gamma} \in \text{Hom}_{\mc{C}_{(\delta,\Gamma)}}(B,C)$, we define $[g]_{\equiv_\Gamma} \circ [f]_{\equiv_\Gamma} = [\Gamma(g) \circ \Gamma(f)]_{\equiv_\Gamma}$;
\end{enumerate}
forms a category that we call the category of \textit{quotient fuzzy functions}.
\end{theorem}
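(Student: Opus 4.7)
The plan is to verify the three categorical axioms in turn: well-definedness of composition (including the verification that the composite lies in the required hom-set), existence of identities, and associativity. The key technical facts that do all the heavy lifting are (i) idempotency of $\Gamma$ on $\delta$-coherent functions from \cref{def:gendeltcon}, giving $\Gamma(\Gamma(f)) = \Gamma(f)$, and (ii) closure of $\delta$-coherent functions under composition, already established in \cref{thm:deltacorcat}.

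First, I would show that the composition rule $[g]_{\equiv_\Gamma} \circ [f]_{\equiv_\Gamma} = [\Gamma(g) \circ \Gamma(f)]_{\equiv_\Gamma}$ is independent of the chosen representatives: if $f \equiv_\Gamma f'$ and $g \equiv_\Gamma g'$, then $\Gamma(f) = \Gamma(f')$ and $\Gamma(g) = \Gamma(g')$, hence $\Gamma(g) \circ \Gamma(f) = \Gamma(g') \circ \Gamma(f')$, giving the same equivalence class. Next, I would check the output lies in $\text{Hom}_{\mc{C}_{(\delta,\Gamma)}}(A,C)$: since $\Gamma(f)$ and $\Gamma(g)$ are $\delta$-coherent and composable, \cref{thm:deltacorcat} ensures $\Gamma(g) \circ \Gamma(f) \colon A \to C$ is $\delta$-coherent, and then idempotency gives $\Gamma(\Gamma(g) \circ \Gamma(f)) = \Gamma(g) \circ \Gamma(f) \colon A \to C$, as required.

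For identities, I would take $[\id_A]_{\equiv_\Gamma}$ as the identity on $A$. Since $\id_A$ is trivially $\delta$-coherent, $\Gamma(\id_A) = \id_A$. For any $[f]_{\equiv_\Gamma} \in \text{Hom}_{\mc{C}_{(\delta,\Gamma)}}(A,B)$ we then have
\[
[f]_{\equiv_\Gamma} \circ [\id_A]_{\equiv_\Gamma} = [\Gamma(f) \circ \id_A]_{\equiv_\Gamma} = [\Gamma(f)]_{\equiv_\Gamma}.
\]
By idempotency, $\Gamma(\Gamma(f)) = \Gamma(f)$, so $\Gamma(f) \equiv_\Gamma f$ and the class equals $[f]_{\equiv_\Gamma}$. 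The right-identity law is symmetric.

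Associativity follows from the same two ingredients. Given $[f], [g], [h]$ composable, I would compute
\[
([h] \circ [g]) \circ [f] = [\Gamma(h) \circ \Gamma(g)] \circ [f] = [\Gamma(\Gamma(h) \circ \Gamma(g)) \circ \Gamma(f)]_{\equiv_\Gamma},
\]
and use idempotency on the $\delta$-coherent function $\Gamma(h) \circ \Gamma(g)$ to rewrite this as $[(\Gamma(h) \circ \Gamma(g)) \circ \Gamma(f)]_{\equiv_\Gamma}$. Associativity of ordinary function composition then gives $[\Gamma(h) \circ (\Gamma(g) \circ \Gamma(f))]_{\equiv_\Gamma}$, and a second application of idempotency rewrites the inner factor, yielding $[h] \circ ([g] \circ [f])$. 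The main potential pitfall is purely bookkeeping: one must apply idempotency at the right moments to move freely between $\Gamma(g) \circ \Gamma(f)$ and $\Gamma(\Gamma(g) \circ \Gamma(f))$; this is exactly where requiring $\Gamma$ to fix $\delta$-coherent functions (as built into \cref{def:gendeltcon}) pays off.
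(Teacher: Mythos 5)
Your proof is correct and follows essentially the same route as the paper: well-definedness of composition via the uniqueness of the $\delta$-coherent representative in each class, closure of the hom-sets via the composition argument from \cref{thm:deltacorcat}, and identities via $\delta$-coherence of $\id_A$ together with idempotency of $\Gamma$. You are in fact slightly more thorough than the paper, which leaves the unit laws and associativity implicit, whereas you spell out exactly where $\Gamma(\Gamma(g)\circ\Gamma(f))=\Gamma(g)\circ\Gamma(f)$ is needed.
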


\begin{proof}
    First we observe that $\circ$ is well-defined since, by \cref{def:gendeltcon} and \cref{lem:cprop}, for all $[f]_{\equiv_\Gamma} \in \text{Hom}_{\mc{C}_{(\delta,\Gamma)}}(A,B)$, $[g]_{\equiv_\Gamma} \in \text{Hom}_{\mc{C}_{(\delta,\Gamma)}}(B,C)$, and for all $f' \in [f]_{\equiv_\Gamma}$ and $g' \in [g]_{\equiv_\Gamma}$, $\Gamma(f)$ and $\Gamma(g)$ are the unique $\delta$-coherent functions in $[f]_{\equiv_\Gamma}$ and $[g]_{\equiv_\Gamma}$ respectively, and thus 
    $[g]_{\equiv_\Gamma} \circ [f]_{\equiv_\Gamma} = [\Gamma(g) \circ \Gamma(f)]_{\equiv_\Gamma} = [\Gamma(g') \circ \Gamma(f')]_{\equiv_\Gamma}=[g']_{\equiv_\Gamma} \circ [f']_{\equiv_\Gamma}$. Let us consider $X=[0,1]^n\in\mc C_{\delta}^o$, then $[\id_X]_{\equiv_\Gamma}$, the $\equiv_\Gamma$-class of the identity over $X$, is the identity morphism since $\id_X$ is the $\delta$-coherent function in $[\id_X]_{\equiv_\Gamma}$. Thus $[\id_X]_{\equiv_\Gamma} \in \mbox{Hom}_{\mc{C}_{(\delta,\Gamma)}}(X,X)$. Let us assume $[f]_{\equiv_\Gamma} \in \mbox{Hom}_{\mc{C}_{(\delta,\Gamma)}}([0,1]^n,[0,1]^m)$ and $[g]_{\equiv_\Gamma} \in \mbox{Hom}_{\mc{C}_{(\delta,\Gamma)}}([0,1]^m,[0,1]^k)$. Then we have to prove that $[g]_{\equiv_\Gamma}\circ [f]_{\equiv_\Gamma} \in \mbox{Hom}_{\mc{C}_{(\delta,\Gamma)}}([0,1]^n,[0,1]^k)$. For this purpose, it is enough to show that $\Gamma(g) \circ \Gamma(f)$ is $\delta$-coherent, but this was already observed in the proof of \cref{thm:deltacorcat}. Thus $[g]_{\equiv_\Gamma}\circ [f]_{\equiv_\Gamma} = [\Gamma(g) \circ \Gamma(f)]_{\equiv_\Gamma} \in  \mbox{Hom}_{\mc{C}_{(\delta,\Gamma)}}([0,1]^n,[0,1]^k)$.
\end{proof}

\noindent We can now define a functor from category $\mc{C}_{(\delta,\Gamma)}$ to category $\mc{C}_\delta$.

\begin{definition}
\label{def:functor2}
Let $\delta \colon [0,1] \rightarrow \{0,1\}$ be a projection, $\Gamma$ a $\delta$-coherency function and 
 $F_\Gamma:\mc{C}_{(\delta,\Gamma)}\to \mc{C}_{\delta}$ the mapping defined by:
    \begin{itemize}
        \item[(1)] for every $X=[0,1]^n\in \mc{C}_{(\delta,\Gamma)}^{o}$, $F_\Gamma(X)=[0,1]^n\in \mc{C}_{\delta}^{o}$, i.e. $F_\Gamma(X)=X$;
        \item[(2)] for all $[f]_{\equiv_\Gamma}\in \mbox{Hom}_{\mc{C}_{(\delta,\Gamma)}}([0,1]^n,[0,1]^k)$ we define $F_\Gamma([f]_{\equiv_\Gamma})=\Gamma(f)\in \mc{C}_{\delta}^h$.
    \end{itemize}
\end{definition}

\begin{theorem}\label{th:quotient functor}
    $F_\Gamma$ introduced in \cref{def:functor2} is a functor from $\mc{C}_{(\delta,\Gamma)}$ to $\mc{C}_{\delta}$.
\end{theorem}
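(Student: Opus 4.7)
The plan is to verify the three functoriality axioms in sequence: well-definedness on morphisms, preservation of identities, and preservation of composition. The objects part is immediate since $\mc{C}_{(\delta,\Gamma)}$ and $\mc{C}_\delta$ have the same collection of objects $\{[0,1]^n \mid n \in \mathbb{N}\}$ and $F_\Gamma$ acts as the identity on them.

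First I would check that $F_\Gamma$ is well-defined on morphisms, i.e., that its value does not depend on the representative chosen from an equivalence class. This is essentially tautological: if $f \equiv_\Gamma f'$, then by definition $\Gamma(f) = \Gamma(f')$, so $F_\Gamma([f]_{\equiv_\Gamma}) = F_\Gamma([f']_{\equiv_\Gamma})$. Moreover, $\Gamma(f) \in \mc{C}_\delta^h$ by \cref{def:gendeltcon}, hence $F_\Gamma$ does land in $\Hom_{\mc{C}_\delta}(F_\Gamma(A), F_\Gamma(B))$ whenever $[f]_{\equiv_\Gamma} \in \Hom_{\mc{C}_{(\delta,\Gamma)}}(A,B)$.

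Next I would check preservation of identities. For $X = [0,1]^n$, the identity in $\mc{C}_{(\delta,\Gamma)}$ is $[\id_X]_{\equiv_\Gamma}$ (as noted in the proof of \cref{th:cateqcla}). Since $\id_X$ is $\delta$-coherent, the idempotency clause of \cref{def:gendeltcon} gives $\Gamma(\id_X) = \id_X = \id_{F_\Gamma(X)}$, so $F_\Gamma([\id_X]_{\equiv_\Gamma}) = \id_{F_\Gamma(X)}$.

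Finally, for composition, I would apply the definitions directly. Given $[f]_{\equiv_\Gamma} \in \Hom_{\mc{C}_{(\delta,\Gamma)}}([0,1]^n,[0,1]^m)$ and $[g]_{\equiv_\Gamma} \in \Hom_{\mc{C}_{(\delta,\Gamma)}}([0,1]^m,[0,1]^k)$, the composition in $\mc{C}_{(\delta,\Gamma)}$ is $[\Gamma(g) \circ \Gamma(f)]_{\equiv_\Gamma}$, so
\[
F_\Gamma([g]_{\equiv_\Gamma} \circ [f]_{\equiv_\Gamma}) \;=\; \Gamma\bigl(\Gamma(g) \circ \Gamma(f)\bigr).
\]
The key observation is that $\Gamma(g) \circ \Gamma(f)$ is already $\delta$-coherent, because composition of $\delta$-coherent functions is $\delta$-coherent (this was established in the proof of \cref{thm:deltacorcat}). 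Hence idempotency of $\Gamma$ gives $\Gamma(\Gamma(g) \circ \Gamma(f)) = \Gamma(g) \circ \Gamma(f) = F_\Gamma([g]_{\equiv_\Gamma}) \circ F_\Gamma([f]_{\equiv_\Gamma})$, as required. The only subtle step in the whole argument is recognizing that idempotency together with closure of $\delta$-coherent functions under composition does all the work, but both facts are already in hand, so no serious obstacle arises.
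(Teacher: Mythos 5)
Your proof is correct and follows essentially the same route as the paper's: identities are preserved because $\id_X$ is $\delta$-coherent and $\Gamma$ is idempotent on $\delta$-coherent functions, and composition is preserved because $\Gamma(g)\circ\Gamma(f)$ is already $\delta$-coherent (closure under composition from \cref{thm:deltacorcat}), so $\Gamma$ fixes it. The only difference is that you spell out the composition step and the (tautological) well-definedness check, where the paper simply defers to \cref{th:cateqcla}.
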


\begin{proof}
First, we have to prove that for every object $A \in \mc{C}^o_{(\delta,\Gamma)}$, $F_\Gamma([\id_{A}]_{\equiv_\Gamma})=\id_{F_\Gamma(A)}=\id_A$, which is trivial as $F_\Gamma([\id_{A}]_{\equiv_\Gamma})=\Gamma(\id_{A})=\id_A$, as $\id_A$ is $\delta$-coherent.
Secondly, let $[f]_{\equiv_\Gamma} \in \mbox{Hom}_{\mc{C}_{(\delta,\Gamma)}}([0,1]^n,[0,1]^m)$ and $[g]_{\equiv_\Gamma} \in \mbox{Hom}_{\mc{C}_{(\delta,\Gamma)}}$ $([0,1]^m,[0,1]^k)$  we have to prove that $F_\Gamma([g]_{\equiv_\Gamma}\circ [f]_{\equiv_\Gamma})=F_\Gamma([g]_{\equiv_\Gamma})\circ F_\Gamma([f]_{\equiv_\Gamma})$. However, this follows from \cref{th:cateqcla}.

\end{proof}

\noindent Notice that, as the composition of two functors is a functor, trivially by composing the functor $F_\Gamma$ and $F_\delta$, we get a functor from $\mc{C}_{(\delta,\Gamma)}$ to $\mc{B}$. This functor, which we indicate by $F_{(\delta,\Gamma)}$ is characterized as follows.

\begin{theorem}

Let $\delta \colon [0,1] \rightarrow \{0,1\}$ be a projection and $\Gamma$ a $\delta$-coherency function.
Then the mapping $F_{(\delta,\Gamma)}:\mc{C}_{(\delta,\Gamma)}\to \mc{B}$ defined by:
\begin{itemize}
        \item[(1)] for every $X=[0,1]^n\in \mc{C}_{(\delta,\Gamma)}^{o}$, $F_{(\delta,\Gamma)}(X)=\{0,1\}^n\in \mc{B}^o$;
        \item[(2)] for every morphism $[f]_{\equiv_\Gamma}\in \mbox{Hom}_{\mc{C}_{(\delta,\Gamma)}}([0,1]^n,[0,1]^m)$ the function $F_{(\delta,\Gamma)}(f)=\Gamma(f)^\delta\in \mc{B}^h$, with $\Gamma(f)^\delta\colon\{0,1\}^n\to \{0,1\}^m$ such that
        $\Gamma(f)^\delta =\delta \circ \Gamma(f)$
\end{itemize}
 is a functor from $\mc{C}_{(\delta,\Gamma)}$ to $\mc{B}$, and it satisfies $F_{(\delta,\Gamma)}=F_\delta\circ F_\Gamma$.
\end{theorem}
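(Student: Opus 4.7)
The plan is to exploit the observation already flagged immediately before the statement: the composition of two functors is again a functor. Hence the entire proof reduces to verifying, separately on objects and on morphisms, the factorization equality $F_{(\delta,\Gamma)} = F_\delta \circ F_\Gamma$. Once this is established, functoriality of $F_{(\delta,\Gamma)}$ follows at once from \cref{th:quotient functor} (which says $F_\Gamma \colon \mc{C}_{(\delta,\Gamma)} \to \mc{C}_\delta$ is a functor) together with \cref{th:booleanfunctor} (which says $F_\delta \colon \mc{C}_\delta \to \mc{B}$ is a functor). So the real deliverable is the factorization identity, and the functoriality conclusion is then a free corollary.

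First, I would unfold the definitions on objects. For every $X = [0,1]^n \in \mc{C}_{(\delta,\Gamma)}^o$, \cref{def:functor2} gives $F_\Gamma(X) = [0,1]^n \in \mc{C}_\delta^o$, and then \cref{def:functor} gives $F_\delta([0,1]^n) = \{0,1\}^n$. This coincides with the prescribed value $F_{(\delta,\Gamma)}(X) = \{0,1\}^n$ in clause (1) of the statement.

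Second, I would handle morphisms. Given $[f]_{\equiv_\Gamma} \in \mbox{Hom}_{\mc{C}_{(\delta,\Gamma)}}([0,1]^n,[0,1]^m)$, \cref{def:functor2} yields $F_\Gamma([f]_{\equiv_\Gamma}) = \Gamma(f)$, and by \cref{def:gendeltcon} this is $\delta$-coherent, so it lies in $\mc{C}_\delta^h$ and the subsequent application of $F_\delta$ is well-typed. Applying \cref{def:functor}, I then get $F_\delta(\Gamma(f)) = \delta \circ \Gamma(f) = \Gamma(f)^\delta$, which is exactly the value prescribed in clause (2). This completes the equality $F_{(\delta,\Gamma)} = F_\delta \circ F_\Gamma$ on the whole of $\mc{C}_{(\delta,\Gamma)}$, and then functoriality follows as described above.

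I do not anticipate a serious obstacle: the argument is essentially bookkeeping. The only subtle point worth stating explicitly is type-compatibility of the composition $F_\delta \circ F_\Gamma$ on morphisms, i.e.\ the fact that $\Gamma(f) \in \mc{C}_\delta^h$, which is precisely the idempotency/range condition built into \cref{def:gendeltcon}. Everything else is a direct unfolding of the definitions introduced in the previous subsection.
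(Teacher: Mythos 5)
Your proposal is correct and matches the paper's own (implicit) argument: the paper omits a formal proof precisely because the preceding remark observes that $F_{(\delta,\Gamma)}$ is the composite of the functors $F_\Gamma$ (\cref{th:quotient functor}) and $F_\delta$ (\cref{th:booleanfunctor}), so only the definitional unfolding on objects and morphisms remains, which you carry out correctly. Nothing is missing.
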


The results presented in this section can be fully generalized to arbitrary sets beyond $[0,1]$ and $\{0,1\}$, thereby allowing the use of different types of logic as targets for a coherent explainer. Furthermore, the functions $\Gamma$ and $\delta$ are subject to no additional constraints apart from the idempotency conditions on $\delta$-coherent functions and the image of $\delta$. Specifically, $\Gamma(f) = f$ for any $\delta$-coherent $f$ and $\delta(x) = x$ for all $x \in \delta([0,1])$. This demonstrates that our approach possesses significant flexibility and can be uniformly applied to explainers operating on non-Boolean logics.

\paragraph{Explaining Fuzzy Functions by Categorial Functors.}

As we noticed, it is not possible to define a functor from $\mc{F}$ to $\mc{B}$. However, we can exploit the explaining functors defined above to associate to each fuzzy function a Boolean function. This can be done according to the following steps: 

\begin{enumerate}
    \item Let us consider a fuzzy function $f\in\mc{F}^h$, with $f:[0,1]^n\rightarrow[0,1]^m$, a projection $\delta$ and a $\delta$-coherency function $\Gamma$.
    \item We exploit the functor $F_\Gamma$ that associates to $f$ its unique $\delta$-coherent function $\Gamma(f)\in\mc{C}_\delta^h$ in its equivalence class $[f]_{\equiv_\Gamma}$.
    \item Then, by means of the functor $F_\delta$ we obtain the Boolean function $\Gamma(f)^\delta\in\mc{B}^h$.
    \item Finally, each component of $\Gamma(f)^\delta$ can be written as a Boolean formula by converting its truth-table into a disjunctive logic clause.
    \end{enumerate}

\section{Experimental Analysis}

To demonstrate the practical validity of our proposed approach, we present two experimental scenarios: $(1)$ learning a $\delta$-coherent function, and $(2)$ learning a non-$\delta$-coherent fuzzy function. We assess the resulting coherence, provide logic-based explanation, and, for the non-coherent case, we extend the explaining functor producing coherent explanations. Consider the $\alpha$-booleanization function $\delta_{0.5}$ where $\alpha=0.5$. For the experiments, we consider the following target functions:
\begin{itemize}
    \item[(1)] $f^{(1)}\colon [0,1]^2 \rightarrow \{0,1\}$ as the boolean XOR function $f^{(1)}(x,y)=\delta(x)\oplus\delta(y)$;
    \item[(2)]  $f^{(2)}\colon [0,1]^2 \rightarrow [0,1]$ as the \L ukasiewicz t-conorm (fuzzy OR) $f^{(2)}(x,y)= \min(1,x+y)$.
\end{itemize}
The function $f^{(2)}$ is non-$\delta$-coherent in the region $T=\{(x,y)\mid x+y \geq 0.5, x\leq 0.5,y\leq0.5 \}\subset[0,1]^2$ (see the example in the \hyperref[sec:introduction]{Introduction}). 

Both functions are implemented with a Logic Explained Network (LEN)~\cite{ciravegna2023logic} trained on synthetic datasets of uniformly generated points $(x, y)\in[0,1]^2$, comprising $1000$ training and $250$ validation samples for binary classification $(l\in\{0,1\})$. Performance evaluation uses a test set of $1000$ points strategically concentrated in $(1)$ around the central decision boundary or $(2)$ around the region~$T$ (see \cref{fig:datasets}).

\begin{figure}[ht]
\vspace{-15pt} 
    \centering
    \begin{subfigure}{0.4\textwidth}
        \centering
        \includegraphics[width=\textwidth]{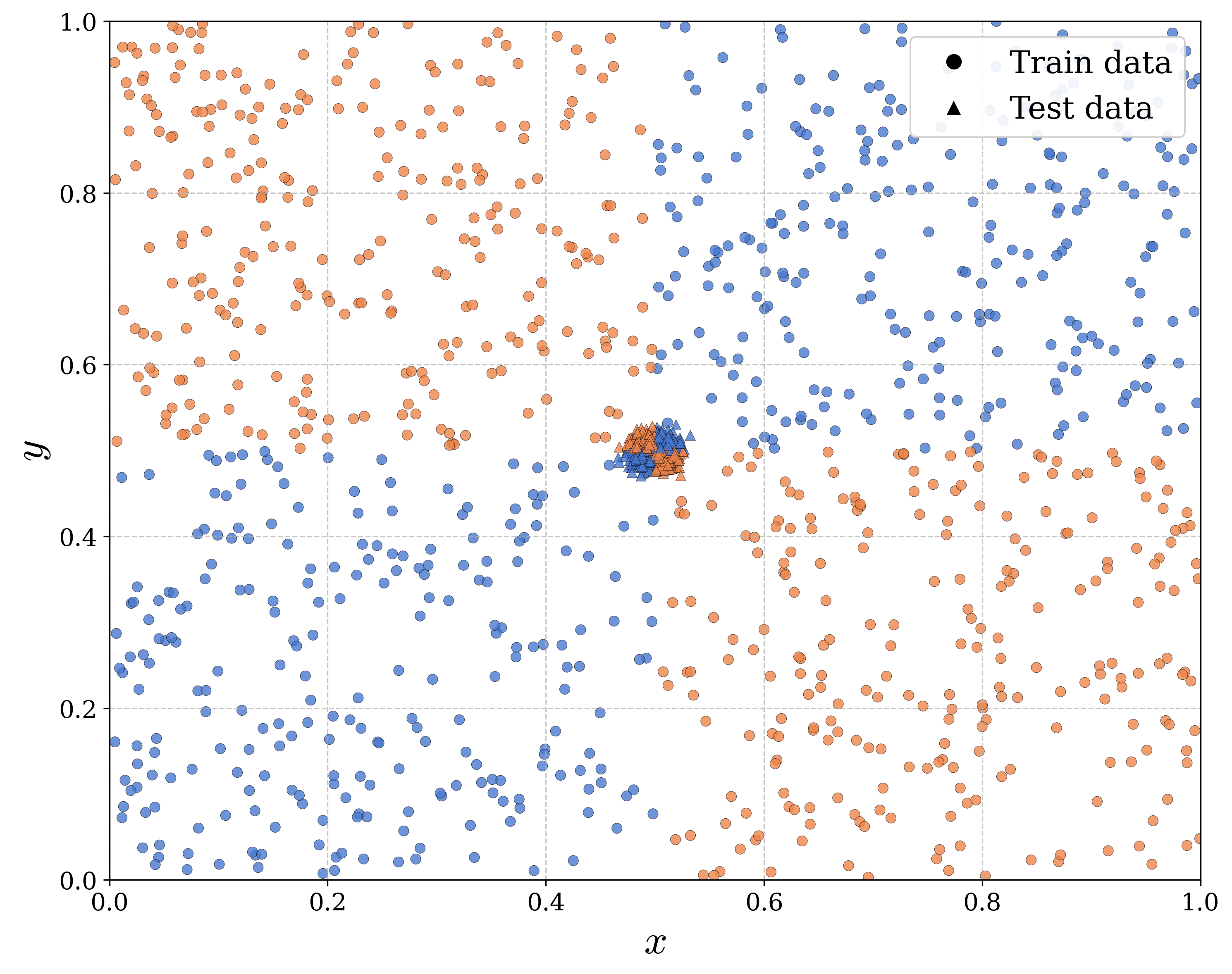}
    \end{subfigure}
    \begin{subfigure}{0.4\textwidth}
        \centering
        \includegraphics[width=\textwidth]{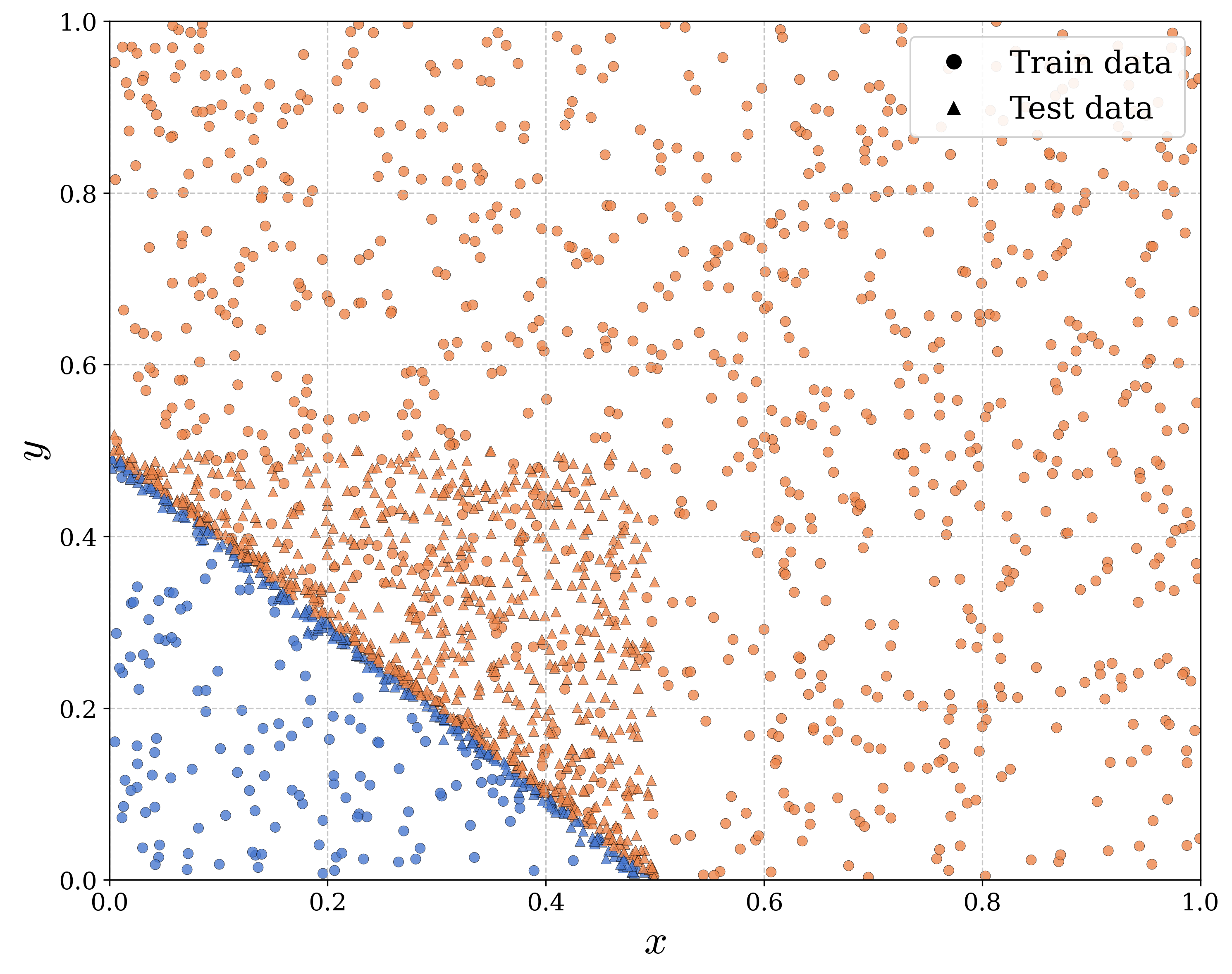}
    \end{subfigure}
    \caption{Data distributions for the XOR (left) and fuzzy OR (right) experiments, with colors indicating class labels.}
    \label{fig:datasets}
    \vspace{-15pt} 
\end{figure}
LENs predict output classes while providing First-Order Logic (FOL) explanations in terms of their inputs. In experiment $(1)$, the inputs are concepts computed by an embedding layer, and we force the coherence between the Boolean concepts and the labels by means of an additional regularization term added to the loss function. On the other hand, in $(2)$ we aim to learn a non-$\delta$-coherent function, so concepts come directly from the training data and there is no extra regularization term. We used PReLU activation functions between layers, with hyperparameters (number and size of hidden layers, learning rate, weight decay, coherence regularization multiplier) optimized through random search for best validation accuracy with early stopping. \Cref{tab:exp} reports the results grouped for both experiments, including accuracy, coherenceny between concepts and labels, FOL explanations, and fidelity. For $(2)$, we extended the explaining functor by applying \Cref{teo:extcoh} and \Cref{teo:cohgi}, yielding a post-hoc modified explainer $\hat{f}^{(2)}$ that discerns non-coherent samples by adding an additional feature~$nc\in \{0,1\}$.

\begin{table}[htb]
\centering
\caption{Experimental results and explanations on the test set for $(1)$ (XOR) and $(2)$ (fuzzy OR) experimental settings. Values are in percentages over all corresponding data points. Explanations are computed on the test set.}
\label{tab:exp}
\resizebox{1\textwidth}{!}{\begin{minipage}{\textwidth}
\setlength{\tabcolsep}{3pt} 
\begin{tabularx}{\textwidth}{@{}l c *{6}{>{\centering\arraybackslash}X}ccc@{}}
\toprule
\multicolumn{2}{c}{\multirow{2}{*}{Exp.}} & \multicolumn{3}{c}{Accuracy} & \multicolumn{3}{c}{Coherency} & \multicolumn{3}{c}{Explanations} \\ 
\cmidrule(lr){3-5}\cmidrule(lr){6-8}\cmidrule(l){9-11}
\multicolumn{2}{c}{} & Train & Val. & Test & Train & Val. & Test & $l$ & FOL & Fidelity \\ 
\midrule
\multirow{2}{*}{(1)} & \multirow{2}{*}{$f^{(1)}$} 
  & \multirow{2}{*}{100} & \multirow{2}{*}{100} & \multirow{2}{*}{95.5} 
  & \multirow{2}{*}{100} & \multirow{2}{*}{100} & \multirow{2}{*}{94.8} 
  & $0$ & $(x \land y) \lor (\neg x \land \neg y)$ & 94.8 \\
  &  &  &  &  &  &  &  & $1$ & $(x \land \neg y) \lor (y \land \neg x)$ & 94.8 \\
\cmidrule(lr){1-11}
\multirow{4}{*}{(2)} & \multirow{2}{*}{$f^{(2)}$} & \multirow{4}{*}{99.9} & \multirow{4}{*}{99.2} & \multirow{4}{*}{88.4} & \multirow{4}{*}{98.5} & \multirow{4}{*}{97.6} & \multirow{4}{*}{67.1} & 0 & $\neg y$ & 67.1 \\
  &  &  &  &  &  &  &  & 1 & $x \lor \neg x = 1$ & 75.7 \\\cmidrule(r){9-11}
  & \multirow{2}{*}{$\hat{f}^{(2)}$} &  &  &  &  &  &  & 0 & $\neg x \land \neg nc$ & 83.8 \\
  &  &  &  &  &  &  &  & 1 & $x \lor nc$ & 83.8 \\
\bottomrule
\end{tabularx}\end{minipage}}
\vspace{-10pt} 
\end{table}

The experiments reveal that when the underlying function is naturally $\delta$-coherent, as in $(1)$, a LEN can achieve near-perfect accuracy and generates logically consistent, high-fidelity FOL explanations. In contrast, the fuzzy OR function in $(2)$ is inherently non-$\delta$-coherent, leading to a marked drop in explanation fidelity despite maintaining high accuracy. Importantly, the introduction of the extended explaining functor effectively compensates for non-coherence, significantly improving the quality and reliability of the explanations in the challenging region. This demonstrates that, while regularization during training can ensure coherence for certain neural functions, post-hoc adjustments are crucial for maintaining interpretability when dealing with non-coherent tasks.

\section{Related Work}

\paragraph{Rule-based Explainers.}
Our proposed framework aligns with and extends previous approaches to interpretable machine learning by ensuring logical consistency across explanations at multiple levels of abstraction. For example, LORE \cite{guidotti2024stable} constructs local interpretable predictors and derives decision and counterfactual rules, offering instance-specific explanations. \cite{Letham2015} introduce a generative model called Bayesian Rule Lists that yields a posterior distribution over possible decision lists. ANCHORS \cite{Ribeiro2018AnchorsHM} provide local explanations based on if-then logical rules.
Our explaining functor differs from these approaches by treating explanations as morphisms that preserve logical entailment across different levels of abstraction. For instance, we may ensure that local explanations remain logically coherent within a broader explanatory structure.
LENs \cite{ciravegna2023logic} are one of the most related approaches to our framework, as they are neural networks almost interpretable-by-design that can be easily converted into Boolean logic formulas. However, LENs explanations were not perfectly coherent, yielding to a limited fidelity of the explanations w.r.t. the original networks. In this sense, the explaining functor yields a practical advantage, maintaining logical consistency while mapping between explanatory levels. 

\paragraph{Categorical Foundations in (X)AI.} 
The category-theoretic perspective on machine learning has been increasingly adopted in recent years, see \cite{shiebler2021category} for a survey. Particularly close in spirit to our work are approaches that use functoriality to enable compositional analysis of machine learning procedures~\cite{cat2024}. These works focus on backpropagation and gradient-based learning. Our approach leverages similar categorical methods, but for a different purpose, as it focuses on (logic-based) explainability. Two recent categorical frameworks related with interpretability are~\cite{TullInterpretability} and~\cite{rodatz2024patternlanguagemachinelearning}. 
The work \cite{rodatz2024patternlanguagemachinelearning} models a different interpretation notion from ours, whereas~\cite{TullInterpretability} is concerned with defining elementary structures for interpretation at a higher level of abstraction. We expect the maturation of our frameworks to lead to a convergence of approaches.


\section{Conclusions and Future Work}

In this paper, we have demonstrated that our categorical framework provides a mathematically grounded and conceptually coherent approach to self-explainable learning schemes. By leveraging Category Theory, we have shown that our framework enables the modeling of existing learning architectures to ensure both explainability and compositional coherence. Moreover, our approach forces cascade explainers to maintain structural consistency, thereby enhancing the interpretability of complex learning pipelines. Crucially, we introduced a novel theory for self-explainable learning schemes and provided rigorous mathematical proofs to establish their correctness and general applicability.

These findings open several promising directions for future research. One key avenue is the empirical validation of our categorical formulation of booleanized explanations over concept-based models, especially when dealing with sub-symbolic data like images. Moreover, we are convinced that our framework can be extended to support other types of explanation, such as LIME-like methods or saliency maps. This extension could allow the investigation on how apparently incomparable XAI methods may be connected (or not) by categorical functors. 
We think that our work may establish a principled foundation for self-explainability in learning systems, bridging theoretical rigor with practical interpretability. Future research in this direction has the potential to advance both theoretical understanding and real-world applicability of explainable AI models.

\bibliographystyle{splncs04}
\bibliography{references}

\end{document}